\newtheorem{definition}{Definition}
\newtheorem{theorem}{Theorem}
\newtheorem{lemma}{Lemma}
\newtheorem{corollary}{Corollary}
\begin{document}

\pdfinfo{
/Title Mathematical Justification of Hard Negative Mining via Isometric Approximation Theorem
/Author Albert Xu, Jhih-Yi Hsieh, Bhaskar Vundurthy, Eliana Cohen, Howie Choset, Lu Li
/TemplateVersion (2022.1)
}

\title{Mathematical Justification of Hard Negative Mining\\via Isometric Approximation Theorem}

\author{
   Albert Xu,
   Jhih-Yi Hsieh,
   Bhaskar Vundurthy,
   Eliana Cohen,
   Howie Choset,
   Lu Li
}
\setcounter{secnumdepth}{0} 

\maketitle

\begin{abstract}
In deep metric learning, the Triplet Loss has emerged as a popular method to learn many computer vision and natural language processing tasks such as facial recognition, object detection, and visual-semantic embeddings. One issue that plagues the Triplet Loss is network collapse, an undesirable phenomenon where the network projects the embeddings of all data onto a single point. 
Researchers predominately solve this problem by using triplet mining strategies. 
While hard negative mining is the most effective of these strategies, existing formulations lack strong theoretical justification for their empirical success.
In this paper, we utilize the mathematical theory of isometric approximation to show an equivalence between the Triplet Loss sampled by hard negative mining and an optimization problem that minimizes a Hausdorff-like distance between the neural network and its ideal counterpart function. This provides the theoretical justifications for hard negative mining's empirical efficacy.
In addition, our novel application of the isometric approximation theorem provides the groundwork for future forms of hard negative mining that avoid network collapse. 
Our theory can also be extended to analyze other Euclidean space-based metric learning methods like Ladder Loss or Contrastive Learning.
\end{abstract}


\section{Introduction}
Research in deep metric learning studies techniques for training deep neural networks to learn similarities and dissimilarities between data samples, typically by learning a distance metric via feature embeddings in $\mathbb R^n$. Most extensively, deep metric learning is used in face recognition \cite{schroff2015facenet, DBLP:journals/corr/LiuWYLRS17,hermans2017defense} and other computer vision tasks \cite{ DBLP:journals/corr/abs-2007-08176, chen2020simple} where there are an abundance of label values. 

Common deep metric learning techniques include contrastive loss \cite{HadsellR2006DRbL} and triplet loss \cite{schroff2015facenet}.
Moreover, each of these methods have variants to address specific applications.
SimCLR \cite{chen2020simple, chen2020big}, for example, is a recent contrastive loss variant designed to address unsupervised deep metric learning with state-of-the-art performance on ImageNet \cite{ILSVRC15}.
Ladder Loss \cite{DBLP:journals/corr/abs-1911-07528}, a generalized variant of triplet loss, improved upon existing methods for coherent visual-semantic embedding and has important applications in multiple visual and language understanding tasks \cite{DBLP:journals/corr/KarpathyJF14, DBLP:journals/corr/MaLL15, DBLP:journals/corr/VinyalsTBE14}.
Given the success of metric learning in a wide range of applications, we see value in investigating its underlying theories. 
In this paper, we present a theoretical framework which explains observed but previously unexplained behaviors of the Triplet Loss.

\section{Literature Review}
We choose to analyze Triplet Loss's underlying theory due to its strong dependence on the triplet selection strategy. This makes the Triplet Loss fickle to work with, as empirical results had shown that randomly sampling these triplets yielded unsatisfactory results.
On the other hand, successful triplet selection strategies like \textit{hard negative mining} can face issues like network collapse, a phenomenon where the network projects all data points onto a single point \cite{schroff2015facenet}, while more stable triplet selection strategies do not perform as well in practice \cite{hermans2017defense}.

In the original FaceNet paper, Schroff et al. find that with large batch sizes (thousands), hard negative mining lead to collapsed solutions. To address this, they instead used a strategy they called semi-hard mining \cite{schroff2015facenet}. On the other hand, Herman et al. find that with smaller batch sizes ($N=72$), the hardest mining strategy significantly out-performs other mining strategies, and does not suffer from collapsed solutions \cite{hermans2017defense}.
These seemingly contradictory results showcase the need for a theoretical framework to explain the theory of hard negative mining and the root cause of collapsed solutions.



There has been some prior literature investigating the phenomenon of network collapse. 
Xuan et al. show that hard negative mining leads to collapsed solutions by analyzing the gradients of a simplified neural network model \cite{xuan2020hard}.
However, they do not account for the many cases where hard negative mining does work. Levi et al. prove that, under a label randomization assumption, the globally optimal solution to the triplet loss necessarily exhibits network collapse \cite{levi2021rethinking}.
Rather than investigating functional hard mining strategies, Levi et al. instead suggest using the less effective easy positive mining to avoid network collapse.

In literature, there are plenty of claims that hard negative mining succeeds \cite{hermans2017defense,faghri2017vse++}, and numerous examples where it fails \cite{schroff2015facenet,ge2019visual,oh2016deep}. Our work explains why network collapse happens by using the theory of isometric approximation to better characterize the behavior of the Triplet Loss.

\section{Background and Definitions}
Establishing the notation used in the paper, let $\mathcal X$ be the data manifold and let $\mathcal Y$ be the classes with $|\mathcal Y|=c$ being the number of classes. Let $h:\mathcal X\to\mathcal Y$ be the true hypothesis function, or true labels of the data. Then the dataset consists of pairs $\{(x_k,y_k)\}_{k=1}^N$ with $x_k\in\mathcal X,y_k\in\mathcal Y$ and $y_k=h(x_k)$. We define the learned neural network as a function $f_\theta:\mathcal X\to\mathbb R^n$ which maps similar points in the data manifold $\mathcal X$ to similar points in $\mathbb R^n$.

As our paper focuses on metric learning, we define the similarity between embeddings to be the Euclidean distance $d(r_1,r_2)=||r_1-r_2||$ where $r_1, r_2 \in \mathbb{R}^n$. Further, we define the shorthand $d_\theta(x_1,x_2)=||f_\theta(x_1)-f_\theta(x_2)||$ where $x_1, x_2 \in \mathcal{X}$.


\subsection{Triplet Loss and Hard Negative Mining}

In this section, we discuss the Triplet Loss, one of the more successful approaches to supervised metric learning introduced by Schroff et al. \cite{schroff2015facenet}.
The Triplet Loss considers samples as triplets of data, composed of the anchor $(x\in\mathcal X)$, positive $(x^+)$, and negative $(x^-)$ samples, described in (\ref{eqn:triple}). The similarity relation (\ref{eqn:triple}a) requires that the anchor and positive samples must be of the same class, while the dissimilarity relation (\ref{eqn:triple}b) requires the anchor and negative must be of different classes. 
\begin{subequations}
    \begin{align}
        x^+\in\{x'\in\mathcal X | h(x)=h(x')\} \\
         x^-\in\{x'\in\mathcal X | h(x)\neq h(x')\} 
    \end{align}
    \label{eqn:triple}
\end{subequations}

Restating the objective of supervised metric learning, the embedding of the anchor sample must be closer to the positive than the negative for every triplet. An example of a satisfactory triplet is shown in Figure \ref{fig:tsep}. Formally, we express this relation via (\ref{eq:desprop}), where $\alpha$ is the margin term. 
\begin{equation}
    d_\theta(x,x^+) + \alpha \leq d_\theta(x,x^-)\ \  ~ \forall ~ x,x^+,x^- \in\mathcal X\label{eq:desprop}
\end{equation}
This leads to the definition of the Triplet Loss in (\ref{eq:ltrip}).
\begin{equation}
    \mathcal L_\textrm{Triplet} = \left[d_\theta(x,x^+) - d_\theta(x,x^-)+ \alpha \right]_+ \label{eq:ltrip}
\end{equation}

The function $[\ \cdot\ ]_+=\max(\cdot,0)$ zeroes negative values in order to ignore all the triplets that already satisfy the desired relation. In addition, as the margin $\alpha$ adds only a constant value to the loss function, its effect is negligible for small $\alpha$. Therefore, we will assume a zero value for the  margin ($\alpha=0$) for the remainder of this paper. 

\begin{definition} \label{defn:triplet}
\textbf{Triplet-Separated}. 
We refer to $m$ non-empty subsets $X^1,\cdots,$ $X^m\subset\mathbb R^n$ as \emph{\textbf{Triplet-Separated}} if for every $X^i$ and $X^j$ with $i\neq j$ we have
\begin{equation} ||x-y|| \leq ||x- z||\ \  \forall x,y\in X^i,\forall z\in X^j \label{eq:tsep1}
\end{equation}


This property can be extended to a function $f_\theta:\mathcal X\to\mathbb R^n$ by checking whether the embedding subsets $X_{f_\theta}^i$ are Triplet-Separated.
\begin{equation} X_{f_\theta}^i = \{f_\theta(x) | x \in \mathcal X, h(x)=i\} \end{equation}
\end{definition}

\begin{figure}
    \centering
    \includegraphics[width=.4\textwidth]{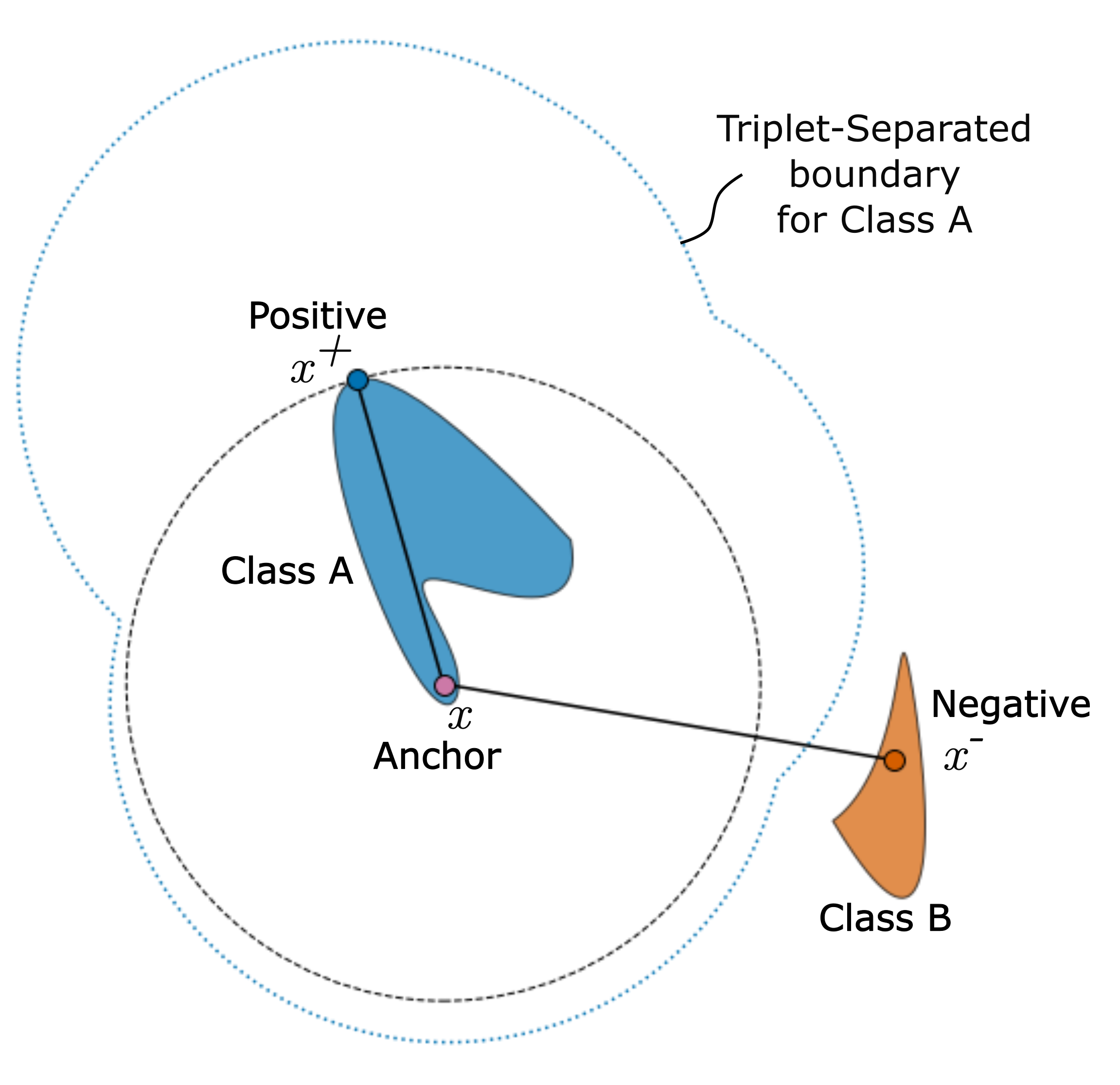}
    \caption{
    An example Anchor, Positive, and Negative triplet. 
    The blue dotted contour is the Triplet-Separated boundary for Class A. It is computed by considering inequality (\ref{eq:tsep1}) for all points in Class A. Because Class B is outside the Triplet-Separated boundary for Class A, the Triplet Loss for this example is zero.
    }
    \label{fig:tsep}
\end{figure}

It is worth noting that $\mathcal L_\textrm{Triplet}(f_\theta)=0$ if and only if $f_\theta$ is Triplet-Separated. An example of two Triplet-Separated sets is shown in Figure \ref{fig:tsep}.

As mentioned in the Literature Review section, the Triplet Loss relies heavily on its triplet mining strategy to achieve its performance for two popularly accepted reasons: First, enumerating all $O(N^3)$ triplets of data every iteration would be too computationally intensive to guarantee fast training. Second, improper sampling of triplets risks network collapse \cite{xuan2020hard}. Our work substantiates the use of hard negative mining, a successful triplet mining strategy, by characterizing conditions that lead to network collapse.

\subsection{Isometric Approximation}
We will present a novel application of the isometric approximation theorem in Euclidean subsets in order to mathematically justify hard negative mining.
The isometric approximation theorem primarily defines the behavior of near-isometries, or functions that are close to isometries, as given by \textbf{Definition \ref{defn:nearisometry2}}.



\begin{definition} \label{defn:nearisometry2}
\textbf{$\varepsilon$-nearisometry}. Let $X$ and $Y$ be real normed spaces. A function $f:A\to Y$ where $A\subset X$ is called an $\textbf{$\varepsilon$-\emph{nearisometry}}$ ($\varepsilon> 0$) if  
\begin{equation} \bigg|||f(x) - f(y)|| - ||x-y||\bigg| \leq \varepsilon, ~ \forall ~ x,y \in A \label{defn:nearisometry}\end{equation}
\end{definition}

In other words, an $\varepsilon$-nearisometry is a function that preserves the distance metric within $\varepsilon$. 
The isometric approximation theorem seeks to determine how close $f$ is to an isometry, say $U:X\to Y$, as given by (\ref{eqn:fU_isometry}).
Note $q_A(\varepsilon)$ is a function of $\varepsilon$ that is fixed for a given $A$ and is thus independent of $f$. Consequently, inequality (\ref{eqn:fU_isometry}) holds for all $\varepsilon> 0$ and all $\varepsilon$-nearisometries $f$.
\begin{equation} ||f(x) - U(x)|| \leq q_A(\varepsilon) ~ \forall ~ x\in A \label{eqn:fU_isometry} \end{equation}


Now consider the case where $X$ and $Y$ are n-dimensional Euclidean metric spaces, making $A\subset\mathbb R^n$. Then the following theorems and definitions \cite{vaisala2002isometric,vaisala2002survey,alestalo2001isometric} prove that $q_A(\varepsilon)$ is linear in $\varepsilon$ given a thickness condition on the set $A$.

\begin{definition}
\textbf{Thickness}. For each unit vector $e\in S^{n-1}$, define the projection $\pi_e:\mathbb R^n\to\mathbb R$ by the dot product $\pi_e(x) = x\cdot e$. The thickness of a bounded set $A$ is the number
\begin{subequations} 
        \begin{align}
            \theta(A) = \inf_{e\in S^{n-1}} \textrm{\emph{diam}}(\pi_e A) \\
            \textrm{\emph{where}} ~  ~ \textrm{\emph{diam}}(X) = \sup_{r_1,r_2\in X} ||r_1-r_2||
        \end{align}
\end{subequations}
\end{definition}

\begin{theorem}[From \textbf{Theorem 3.3} \cite{alestalo2001isometric}]
\label{thm:ciap}
 Suppose that $0<q\leq 1$ and $A\subset \mathbb R^n$ is a compact set with $\theta(A)\geq q\ \textrm{diam}(A)$. Let $f:A\to\mathbb R^n$ be an $\varepsilon$-nearisometry. Then there is an isometry $U:\mathbb R^n\to\mathbb R^n$ such that
\begin{equation} ||f(x)-U(x)|| \leq c_n\varepsilon /q\ \  \forall x\in A \end{equation}

\noindent with $c_n$ depending only on dimension.
\end{theorem}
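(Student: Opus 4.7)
The plan is to reduce the theorem to a quantitative trilateration statement anchored on a well-conditioned affine basis drawn from $A$. I would proceed in three steps: (i) use the thickness hypothesis to select $n+1$ reference points whose affine frame is ``$q$-fat,'' (ii) construct a candidate affine isometry $U$ by Procrustes-matching this frame to its image under $f$, and (iii) transfer the bound from the reference points to arbitrary $x\in A$ via a linearized trilateration argument. The constant $c_n$ will absorb dimension-dependent factors, while the factor $1/q$ will trace back to the smallest singular value of the basis matrix normalized by $\mathrm{diam}(A)$.

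For step (i), proceed greedily: pick $a_0\in A$ arbitrarily and, having selected $a_0,\ldots,a_{i-1}$, let $e$ be a unit vector orthogonal to $\mathrm{span}(a_1-a_0,\ldots,a_{i-1}-a_0)$. The hypothesis $\theta(A)\ge q\,\mathrm{diam}(A)$ forces $\pi_e(A)$ to have diameter at least $q\,\mathrm{diam}(A)$, so compactness lets us choose $a_i\in A$ with $|(a_i-a_0)\cdot e|\ge \tfrac{1}{2}q\,\mathrm{diam}(A)$. After $n$ steps the matrix $V=[v_1\,|\,\cdots\,|\,v_n]$ with $v_i:=a_i-a_0$ satisfies $\sigma_{\min}(V)\ge c_n\,q\,\mathrm{diam}(A)$.

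For step (ii), set $w_i:=f(a_i)-f(a_0)$ and $W:=[w_1\,|\,\cdots\,|\,w_n]$. The nearisometry property combined with the polarization identity $2\langle u,v\rangle=\|u\|^2+\|v\|^2-\|u-v\|^2$ forces the Gram matrices to satisfy $\|V^\top V-W^\top W\|\le C\,\varepsilon\,\mathrm{diam}(A)$. A polar-decomposition Procrustes pairing then produces an orthogonal $U_0$ with $\|U_0 v_i-w_i\|\le c_n\,\varepsilon/q$ for every $i$, by dividing $\|V^\top V-W^\top W\|$ by $\sigma_{\min}(V)+\sigma_{\min}(W)=\Omega(q\,\mathrm{diam}(A))$ via the standard inequality $\|P^{1/2}-Q^{1/2}\|\le \|P-Q\|/(\sigma_{\min}(P^{1/2})+\sigma_{\min}(Q^{1/2}))$. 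Define $U(x):=U_0(x-a_0)+f(a_0)$. For step (iii), fix $x\in A$; subtracting the $i=0$ squared-distance equation from each of the others in $\|f(x)-f(a_i)\|\approx\|x-a_i\|\approx\|U(x)-f(a_i)\|$ cancels the quadratic terms and yields a linear system $W^\top(f(x)-U(x))=r$, and inverting using the lower bound on $\sigma_{\min}(W)$ bounds $\|f(x)-U(x)\|$.

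The last step is where the main technical difficulty lies: a routine estimate on the residual $r$ gives $\|r\|=O(\varepsilon\,\mathrm{diam}(A)/q)$, which when divided by $\sigma_{\min}(W)\sim q\,\mathrm{diam}(A)$ only yields $O(\varepsilon/q^2)$. Sharpening this to the claimed $O(\varepsilon/q)$ requires isolating the part of $r$ that grows linearly with $\|x-a_0\|$ from the part that is independent of $x$, and exploiting the fact that the former is aligned with $U_0^\top w_i-v_i$ in a way that reduces its contribution when inverted through $W^\top$. This delicate bookkeeping, together with the observation that the inequality becomes vacuous once $\varepsilon\gtrsim q\,\mathrm{diam}(A)$ (a regime that the constant $c_n$ absorbs), is the heart of the argument.
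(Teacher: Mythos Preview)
The paper does not prove this theorem at all: it is quoted verbatim from \cite{alestalo2001isometric} (their Theorem~3.3) and used as a black box in the proof of Lemma~1 and Theorem~2. There is therefore no ``paper's own proof'' to compare your proposal against.

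That said, your outline is the standard route to such isometric approximation results and is essentially the strategy in the Alestalo--Trotsenko--V\"ais\"al\"a paper itself: extract a $q$-fat affine basis from the thickness hypothesis, build the approximating isometry by a Procrustes/Gram-matrix argument on that basis, then propagate the bound to arbitrary points by trilateration. You have also correctly identified the one genuinely delicate point, namely that a naive residual estimate in step~(iii) gives $O(\varepsilon/q^2)$ rather than $O(\varepsilon/q)$, and that sharpening this requires more careful bookkeeping of which error terms scale with $\|x-a_0\|$ versus which are $x$-independent. Your sketch is sound as a plan, though the final paragraph is where all the actual work lies and would need to be fully written out to constitute a proof.
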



As this property depends entirely on the set $A$, we call Theorem \ref{thm:ciap} the \textbf{$c$-Isometric Approximation Property} (c-IAP) on set $A$ with $c=c_n$ diam($A)/\theta(A)$.


\section{Theory and Proofs} \label{sec:theory}
\subsection{Overview and Problem Setup}
From the background and definitions, the goal of the Triplet Loss is to learn a function $f_\theta$ such that the induced distance metric $d_\theta$ satisfies the property in (\ref{eq:desprop}). However, rather than starting with the Triplet Loss then introducing hard negative mining as a method to correct the weaknesses of the Triplet Loss, we instead derive hard negative mining and the Triplet Loss together, using the Hausdorff-like distance $d_\textrm{haus}$ as a starting point.







\subsection{Hausdorff-Like Distance $d_\textrm{haus}$}

Reiterating the training objective from the problem setup, we aim to learn a function $f_\theta$ that is Triplet-Separated (\textbf{Definition \ref{defn:triplet}}). We restate this problem as a distance minimization problem, and prove that it is equivalent to hard negative mining with the Triplet Loss.

First we construct set of all functions $f:\mathcal X\to\mathbb R^n$ that are Triplet-Separated and denote it with $\mathcal F_{TS}$.
We next construct the Hausdorff-like distance metric (denoted by $d_\textrm{haus}$) between these functions that compares the embedding subsets via the Hausdorff distance metric $d_H$. 

\begin{gather}
    X_f^i = \{f(x) | x\in\mathcal X, h(x)=i\} \\
    d_\textrm{haus}(f_1, f_2) = \max_{i\in\mathcal Y} d_H(X_{f_1}^i, X_{f_2}^i) \label{eq:defn-dhaus}
\end{gather}

One way to solve metric learning is to find the closest $f_\theta$ to any function in $\mathcal F_{TS}$ as indicated by (\ref{eq:dFTShaus}).
\begin{equation}
    d_\textrm{haus}(f_\theta,\mathcal F_{TS}) = \inf_{f_\textrm{haus}^*\in\mathcal F_{TS}} d_\textrm{haus}(f_\theta, f_\textrm{haus}^*) \label{eq:dFTShaus}
\end{equation}

We claim that the Triplet Loss with hard negative mining is equivalent to minimizing $d_\textrm{haus}(f_\theta,\mathcal F_{TS})$ within a constant factor (see \textbf{Corollary \ref{coro:1}}).

\subsection{Isometric Approximation Applied to $d_\textrm{haus}$}
In this section, we present \textbf{Theorem \ref{thm:hausiso}} to show that minimizing the Hausdorff-like distance is equivalent to minimizing a discrepancy in distance metrics, referred to as the \textbf{isometric error} (\textbf{Definition \ref{defn:diso}}).




\begin{definition} \label{defn:diso}
\textbf{isometric error}. 
For two functions $f,g:\mathcal X\to\mathbb R^n$, we define the \textbf{\emph{isometric error}} $d_\textrm{iso}$ to be the maximum discrepancy between their distance metrics.
\begin{gather}
  \begin{aligned}
    d_\textrm{iso}(f,g) = 
    \sup_{x_1,x_2\in\mathcal X}\bigg| ||f(x_1) - f(x_2)|| - ||g(x_1) - g(x_2)|| \bigg|
  \end{aligned}
\end{gather}

\end{definition}


Similar to (\ref{eq:dFTShaus}), we extend the definition of \textbf{isometric error} to $d_\textrm{iso}(f_\theta,\mathcal F_{TS})$ as follows:
\begin{equation}
    d_\textrm{iso}(f_\theta, \mathcal F_{TS}) = \inf_{f_\textrm{iso}^*\in\mathcal F_{TS}} d_\textrm{iso}(f_\theta, f_\textrm{iso}^*)
\end{equation}

\begin{lemma} \label{lemma:1}
If $d_\textrm{iso}(f,g)=\varepsilon$ and $\theta(f(\mathcal X)) \geq q$, then there is a function $U$ isometric to $f$ such that:
\begin{equation} \label{eq:lemma1}
    ||g(x) - U(x)|| \leq c_n\varepsilon / q ~ \forall x\in\mathcal X
\end{equation}
\end{lemma}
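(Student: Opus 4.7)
The plan is to reduce this directly to the $c$-Isometric Approximation Property (Theorem 1) applied to the image $A = f(\mathcal{X}) \subset \mathbb R^n$. The intuition is that transporting $g$ backwards across $f$ gives a map on $A$ whose pairwise distances agree with those on $A$ up to $\varepsilon$; Theorem 1 then promotes this near-isometry of $A$ to a genuine ambient isometry $V$ of $\mathbb R^n$, and $U := V \circ f$ will be the required isometric copy of $f$.

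Concretely, since $f$ need not be injective, I first pick (via the Axiom of Choice) a section $s : A \to \mathcal X$ with $f \circ s = \mathrm{id}_A$ and define $\phi : A \to \mathbb R^n$ by $\phi(y) = g(s(y))$. For any $y_1, y_2 \in A$, applying $d_\textrm{iso}(f,g) = \varepsilon$ to the pair $s(y_1), s(y_2)$ yields
\[
\Big|\,\|\phi(y_1) - \phi(y_2)\| - \|y_1 - y_2\|\,\Big| = \Big|\,\|g(s(y_1)) - g(s(y_2))\| - \|f(s(y_1)) - f(s(y_2))\|\,\Big| \leq \varepsilon,
\]
so $\phi$ is an $\varepsilon$-nearisometry on $A$. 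Applying Theorem 1 (using compactness of $\overline{A}$ and the thickness hypothesis $\theta(A) \geq q$) produces an isometry $V : \mathbb R^n \to \mathbb R^n$ with $\|\phi(y) - V(y)\| \leq c_n \varepsilon / q$ for every $y \in A$. Setting $U = V \circ f$ then gives a function isometric to $f$ in the required sense.

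For any $x \in \mathcal X$, the point $\phi(f(x)) = g(s(f(x)))$ need not equal $g(x)$, but since $f(s(f(x))) = f(x)$, a second appeal to the $d_\textrm{iso}$ hypothesis on the pair $x, s(f(x))$ forces $\|g(x) - \phi(f(x))\| \leq \varepsilon$. The triangle inequality then gives
\[
\|g(x) - U(x)\| \leq \|g(x) - \phi(f(x))\| + \|\phi(f(x)) - V(f(x))\| \leq \varepsilon + c_n \varepsilon / q,
\]
which I absorb into the stated bound $c_n \varepsilon / q$ by enlarging the dimensional constant. The main obstacle is precisely this non-injectivity of $f$: it is what forces the introduction of the section $s$ and creates the extraneous additive $\varepsilon$ that must be absorbed. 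Confirming compactness by passing to $\overline{A}$ and matching the exact thickness/diameter scaling convention of Theorem 1 to the form $\theta(f(\mathcal X)) \geq q$ stated here are the remaining (routine) technicalities, and they do not affect the structure of the argument.
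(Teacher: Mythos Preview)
Your proof is correct and follows essentially the same route as the paper: compose $g$ with a right-inverse of $f$ to obtain an $\varepsilon$-nearisometry on $f(\mathcal X)$, invoke Theorem~\ref{thm:ciap} to produce an ambient isometry $V$, and set $U=V\circ f$. Your treatment of the non-injective case via an explicit section and the triangle inequality (absorbing the stray $\varepsilon$ into $c_n$) is in fact cleaner than the paper's informal partition into $X^\dagger$ and $X'$.
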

\begin{proof} 
If $f$ is invertible, then $g f^{-1}$ is a function $\mathbb R^n\to\mathbb R^n$. $gf^{-1}$ is an $\varepsilon$-nearisometry because $d_\textrm{iso}(f,g)=\varepsilon$. Then if $\theta(f(\mathcal X)) \geq q$, the conditions for \textbf{Theorem 1} are satisfied, so there exists an isometry $U_1:\mathbb R^n\to\mathbb R^n$
\begin{equation}
    ||gf^{-1}(x) - U_1(x)|| \leq c_n\varepsilon / q ~ \forall x\in f(\mathcal X)
\end{equation}

Then
\begin{equation} \label{eq:lemma1p2}
    ||g(x) - U_1(f(x)) || \leq c_n\varepsilon / q ~ \forall x\in\mathcal X
\end{equation}

Therefore if $f$ is invertible, (\ref{eq:lemma1}) holds with $U=U_1f$.

If $f$ is not invertible, then there exists $x_1\neq x_2\in\mathcal X$ such that $f(x_1)=f(x_2)$. We divide the elements of $\mathcal X$ into subsets $X^\dagger$ and $X'$ such that $f$ is invertible on $X^\dagger$, $f(\mathcal X^\dagger)=f(\mathcal X)$, and $d_\textrm{iso}$ is unchanged on $\mathcal X^\dagger$. Consequently, (\ref{eq:lemma1p2}) holds on $\mathcal X^\dagger$.

Moving our attention to $X'$, for all $x'\in X'$ there exists $x^\dagger\in X^\dagger$ such that $f(x')=f(x^\dagger)$. Then because $d_\textrm{iso}$ is unchanged on $X^\dagger$, $||f(x')-g(x')|| \leq ||f(x^\dagger) - g(x^\dagger)|| \leq c_n\varepsilon/q$. Therefore (\ref{eq:lemma1}) holds for $f$ and $g$ on $\mathcal X$.
\end{proof}

\begin{theorem} \label{thm:hausiso}
$d_\textrm{haus}(f_\theta,\mathcal F_{TS})$ and $d_\textrm{iso}(f_\theta,\mathcal F_{TS})$ upper bound each other within a linear factor for all $f_\theta$ with some minimum thickness $\theta_{*}$.
\end{theorem}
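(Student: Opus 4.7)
The plan is to split the theorem into two linear inequalities, proved independently: one direction bounding $d_\textrm{iso}(f_\theta,\mathcal F_{TS})$ by $d_\textrm{haus}(f_\theta,\mathcal F_{TS})$, which is elementary and needs no thickness hypothesis, and the reverse direction, bounding $d_\textrm{haus}$ by $d_\textrm{iso}$, where \textbf{Lemma \ref{lemma:1}} is invoked and where the assumption $\theta(f_\theta(\mathcal X))\geq\theta_{*}$ is consumed.

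For the easy direction, I would take any $f^*_\textrm{haus}\in\mathcal F_{TS}$ nearly attaining $d_\textrm{haus}(f_\theta,\mathcal F_{TS})$ and construct a companion $f^*_\textrm{iso}\in\mathcal F_{TS}$ by selecting, for each $x$, the point of $X^{h(x)}_{f^*_\textrm{haus}}$ nearest to $f_\theta(x)$. This gives the pointwise bound $\|f_\theta(x)-f^*_\textrm{iso}(x)\|\leq d_\textrm{haus}(f_\theta,f^*_\textrm{haus})$, and two applications of the triangle inequality then yield $d_\textrm{iso}(f_\theta,f^*_\textrm{iso})\leq 2\,d_\textrm{haus}(f_\theta,f^*_\textrm{haus})$. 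Membership of $f^*_\textrm{iso}$ in $\mathcal F_{TS}$ follows because its class images are subsets of those of $f^*_\textrm{haus}$, and the defining inequality (\ref{eq:tsep1}) is a universal statement that is preserved under restriction. Taking the infimum over $f^*_\textrm{haus}$ gives $d_\textrm{iso}(f_\theta,\mathcal F_{TS})\leq 2\,d_\textrm{haus}(f_\theta,\mathcal F_{TS})$.

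For the reverse direction, I would fix $f^*_\textrm{iso}\in\mathcal F_{TS}$ nearly attaining $d_\textrm{iso}(f_\theta,\mathcal F_{TS})=\varepsilon$ and apply \textbf{Lemma \ref{lemma:1}} with $f=f_\theta$ and $g=f^*_\textrm{iso}$. Inspecting the proof of that lemma, the produced function has the form $U=U_1\circ f_\theta$ for an isometry $U_1:\mathbb R^n\to\mathbb R^n$, so the conclusion reads $\|f^*_\textrm{iso}(x)-U_1(f_\theta(x))\|\leq c_n\varepsilon/\theta_{*}$. Applying the Euclidean isometry $U_1^{-1}$ to both arguments preserves norms, so the candidate $f^*_\textrm{haus}:=U_1^{-1}\circ f^*_\textrm{iso}$ satisfies $\|f_\theta(x)-f^*_\textrm{haus}(x)\|\leq c_n\varepsilon/\theta_{*}$ for every $x$. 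Because $U_1^{-1}$ is a rigid motion of $\mathbb R^n$, it preserves the Triplet-Separated property, so $f^*_\textrm{haus}\in\mathcal F_{TS}$. The pointwise bound controls both one-sided terms in every class-wise Hausdorff distance, yielding $d_\textrm{haus}(f_\theta,\mathcal F_{TS})\leq (c_n/\theta_{*})\,d_\textrm{iso}(f_\theta,\mathcal F_{TS})$.

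The main obstacle is the reverse direction: one must turn an abstract near-matching of pairwise distances into an actual \emph{function} in $\mathcal F_{TS}$ that is pointwise close to $f_\theta$. The crucial maneuver is to use the isometry $U_1$ from the isometric approximation theorem not to deform $f_\theta$ (which would destroy control on thickness and membership), but to realign $f^*_\textrm{iso}$, so the new candidate $U_1^{-1}\circ f^*_\textrm{iso}$ inherits Triplet-Separation while becoming pointwise close to $f_\theta$. This step is also the sole place where the thickness hypothesis on $f_\theta$ is used, and it is what fixes the constant $c_n/\theta_{*}$ in the final inequality.
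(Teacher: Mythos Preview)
Your proof is correct and follows the same two-direction strategy as the paper: the triangle inequality yields $d_\textrm{iso}\le 2\,d_\textrm{haus}$, and Lemma~\ref{lemma:1} yields $d_\textrm{haus}\le (c_n/\theta_*)\,d_\textrm{iso}$. You are in fact more careful than the paper at two points---the nearest-point construction that converts set-wise Hausdorff control into a pointwise bound, and the post-composition with $U_1^{-1}$ to keep the comparison function inside $\mathcal F_{TS}$---both of which the paper's argument passes over silently in equations~(\ref{eq:thm2p3.5}) and~(\ref{eq:thm2p0}).
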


\begin{proof}

We first prove that $d_\textrm{iso}$ upper bounds $d_\textrm{haus}$. To this end, fix the minimizing function $f^*_\textrm{iso}$ in the following expression:
\begin{equation} d_\textrm{iso}(f_\theta, \mathcal F_{TS}) = \inf_{f^*_\textrm{iso}\in\mathcal F_{TS}} d_\textrm{iso}(f_\theta, f^*_\textrm{iso}) \end{equation}

From \textbf{Lemma \ref{lemma:1}} we have that:
\begin{equation} \sup_{x\in\mathcal X} ||f_\theta(x) - f^*_\textrm{iso}(x)|| \leq c\ d_\textrm{iso}(f_\theta,f^*_\textrm{iso}) \label{eq:thm2p0} \end{equation}

\noindent with $c=c_n/\theta_*$. From the definition of Hausdorff-like distance (\ref{eq:defn-dhaus}) we have (\ref{eq:thm2p1}), and from (\ref{eq:dFTShaus}) we have (\ref{eq:thm2p2}):
\begin{gather}
    d_\textrm{haus}(f_\theta,f^*_\textrm{iso}) \leq \sup_{x\in\mathcal X} ||f_\theta(x) - f^*_\textrm{iso}(x)|| \label{eq:thm2p1} \\
    d_\textrm{haus}(f_\theta,\mathcal F_{TS}) \leq d_\textrm{haus}(f_\theta,f^*_\textrm{iso}) \label{eq:thm2p2} \\
    d_\textrm{haus}(f_\theta,\mathcal F_{TS}) \leq c\ d_\textrm{iso}(f_\theta,\mathcal F_{TS}) \label{eq:thm2p3}
\end{gather}

(\ref{eq:thm2p3}) follows from (\ref{eq:thm2p0}-\ref{eq:thm2p2}), proving that $d_\textrm{iso}$ upper bounds $d_\textrm{haus}$ within a constant factor of $c$.

For the converse claim that $d_\textrm{haus}$ upper bounds $d_\textrm{iso}$, we once again fix the $f^*_\textrm{haus}$ that minimizes the following expression:
\begin{equation}
d_\textrm{haus}(f_\theta,\mathcal F_{TS}) = \sup_{x\in\mathcal X} || f_\theta(x) - f^*_\textrm{haus}(x)|| \label{eq:thm2p3.5}
\end{equation}

Next, for the four points $f_\theta(x_1)$, $f_\theta(x_2)$, $f^*_\textrm{haus}(x_1)$, and $f^*_\textrm{haus}(x_2)$, apply the triangle inequality via (\ref{eq:thm2p4}) to get (\ref{eq:thm2p5}).
\begin{align}
    ||f_\theta(x_1) - f_\theta(x_2)|| &\leq 
    \left(\begin{aligned}
     & ||f_\theta(x_1) - f^*_\textrm{haus}(x_1)|| +\\
     & ||f^*_\textrm{haus}(x_1) - f^*_\textrm{haus}(x_2)|| +\\
     & ||f^*_\textrm{haus}(x_2) - f_\theta(x_2)||
    \end{aligned}\right) \label{eq:thm2p4} \\
    &\leq \left(\begin{aligned}
     & ||f^*_\textrm{haus}(x_1) - f^*_\textrm{haus}(x_2)|| +\\
     & 2\sup_{x\in\mathcal X} || f_\theta(x) - f^*_\textrm{haus}(x)|| 
    \end{aligned}\right) \label{eq:thm2p5}
\end{align}


It is worth noting that (\ref{eq:thm2p5}) holds for all $x_1,x_2\in\mathcal X$. Furthermore, we can swap $f_\theta$ and $f^*_\textrm{haus}$ in (\ref{eq:thm2p5}) and use (\ref{eq:thm2p3.5}) to get (\ref{eq:thm2p6}) and thus (\ref{eq:thm2p7}).

\begin{gather}
\begin{alignedat}[t]{2}
    &d_\textrm{iso}(f_\theta, f^*_\textrm{haus}) = &&  \\
    &\sup_{x_1,x_2\in \mathcal X} \bigg|||f_\theta(x_1)-f_\theta(x_2)|| - ||&&f^*_\textrm{haus}(x_1)-f^*_\textrm{haus}(x_2)|| \bigg| \leq \\
    & && 2d_\textrm{haus}(f_\theta,\mathcal F_{TS})
\end{alignedat} \label{eq:thm2p6} \\
d_\textrm{iso}(f_\theta,\mathcal F_{TS}) \leq d_\textrm{iso}(f_\theta, f^*_\textrm{haus}) \leq 2d_\textrm{haus}(f_\theta,\mathcal F_{TS}) \label{eq:thm2p7}
\end{gather}

(\ref{eq:thm2p7}) proves that $d_\textrm{haus}$ upper bounds $d_\textrm{iso}$ within a constant factor of $2$.
\end{proof}



Theorem \ref{thm:hausiso} shows that $d_\textrm{haus}$ and $d_\textrm{iso}$ are exchangeable as minimization objectives because they upper bound each other within linear factors. 
Now that we have rewritten the minimization objective as a difference of two distance functions, we can derive the Triplet Loss.


\subsection{Recovering the Triplet Loss}

In this section, we will prove that $d_\textrm{iso}$ (\textbf{Definition \ref{defn:diso}}) is equivalent to the Triplet Loss sampled by hard negative mining.

\begin{theorem} \label{thm:disotlhnm}
The Triplet Loss sampled by hard negative mining and the isometric error $d_\textrm{iso}$ upper bound each other within a linear factor.
\end{theorem}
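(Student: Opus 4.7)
The plan is to establish the two-way inequality by showing $\mathcal L_\textrm{HNM}(f_\theta) \leq 2\, d_\textrm{iso}(f_\theta, \mathcal F_{TS})$ and $d_\textrm{iso}(f_\theta, \mathcal F_{TS}) \leq c \cdot \mathcal L_\textrm{HNM}(f_\theta)$ for some constant $c$ independent of $f_\theta$. The first direction follows quickly from triangle-inequality manipulations on distances, mirroring the logic already used in Theorem~\ref{thm:hausiso}, while the second requires an explicit construction of a Triplet-Separated function from $f_\theta$. To unpack the loss, note that under hard negative mining the loss at anchor $x$ uses a positive $x^+ \in X^{h(x)}$ and the nearest negative $x^- \in \bigcup_{j \neq h(x)} X^j$; saying $\mathcal L_\textrm{HNM}(f_\theta) = L$ therefore gives the pointwise bound $d_\theta(x, x^+) - d_\theta(x, x^-) \leq L$ for every valid triplet.

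For the first direction, I would fix $f^*_\textrm{iso} \in \mathcal F_{TS}$ achieving the infimum in $d_\textrm{iso}(f_\theta, \mathcal F_{TS})$. For any triplet $(x, x^+, x^-)$, the definition of $d_\textrm{iso}$ gives $d_\theta(x, x^+) \leq d_{f^*_\textrm{iso}}(x, x^+) + d_\textrm{iso}(f_\theta, f^*_\textrm{iso})$ and $d_{f^*_\textrm{iso}}(x, x^-) \leq d_\theta(x, x^-) + d_\textrm{iso}(f_\theta, f^*_\textrm{iso})$. Since $f^*_\textrm{iso}$ is Triplet-Separated, $d_{f^*_\textrm{iso}}(x, x^+) \leq d_{f^*_\textrm{iso}}(x, x^-)$, and chaining these inequalities yields $d_\theta(x, x^+) - d_\theta(x, x^-) \leq 2\, d_\textrm{iso}(f_\theta, f^*_\textrm{iso})$. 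Taking the positive part and the supremum over the triplets selected by HNM gives $\mathcal L_\textrm{HNM}(f_\theta) \leq 2\, d_\textrm{iso}(f_\theta, \mathcal F_{TS})$.

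For the reverse direction, I would construct $f^* \in \mathcal F_{TS}$ explicitly from $f_\theta$. A natural candidate is a class-conditional translation $f^*(x) = f_\theta(x) + \kappa L \cdot v_{h(x)}$, where $\{v_i\}_{i \in \mathcal Y} \subset \mathbb R^n$ is a fixed set of anchor vectors (e.g., scaled vertices of a regular simplex) and $\kappa$ is a universal constant. Within-class distances are preserved exactly, so $d_\textrm{iso}(f_\theta, f^*)$ picks up only between-class discrepancies, which are bounded by $\kappa L \cdot \max_{i \neq j} \|v_i - v_j\|$---a constant multiple of $L$ as desired.

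The main obstacle is showing that this construction simultaneously lands in $\mathcal F_{TS}$ with a constant $\kappa$ independent of $f_\theta$. The Triplet-Separated condition demands $d_\theta(x, x^+) \leq \|f_\theta(x) - f_\theta(x^-) + \kappa L(v_{h(x)} - v_{h(x^-)})\|$ for every triplet, but the translation vector is not in general aligned with $f_\theta(x) - f_\theta(x^-)$, so the inequality does not fall out directly. Overcoming this will likely require either choosing $\{v_i\}$ adaptively to $f_\theta$'s geometry, or sidestepping the direct translation entirely by first defining the pseudo-metric $d^*(x, x') = d_\theta(x, x') + L \cdot \mathbb{1}\{h(x) \neq h(x')\}$---which is trivially Triplet-Separated and within $L$ of $d_\theta$---and then realizing it as a Euclidean embedding through a second application of the isometric approximation machinery already developed in Lemma~\ref{lemma:1}. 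This embedding step is where the real geometric content lives.
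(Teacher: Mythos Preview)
Your first direction, $\mathcal L_\textrm{HNM}(f_\theta) \leq 2\, d_\textrm{iso}(f_\theta,\mathcal F_{TS})$, is correct and in fact cleaner than the paper's version of the same inequality. The paper gets there by rewriting the $\sup_{x_1,x_2}$ as a $\sup$ over triplets with a $\max$, applying $\max(a,b)\geq(a+b)/2$, and then running a sign case analysis on the minimizer $f^*$; you get the same factor of $2$ in two lines from the Triplet-Separated property of $f^*_\textrm{iso}$ and the definition of $d_\textrm{iso}$.

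The reverse direction is where your proposal has a genuine gap, and the paper's argument is quite different from either of your attempts. Your translation construction $f^*(x)=f_\theta(x)+\kappa L\,v_{h(x)}$ cannot be made Triplet-Separated with $\kappa$ independent of $f_\theta$: to force $d_{f^*}(x,x^+)\leq d_{f^*}(x,x^-)$ via the reverse triangle inequality you would need $\kappa L\gamma \geq d_\theta(x,x^+)+d_\theta(x,x^-)$, and the right-hand side scales with $\textrm{diam}(f_\theta(\mathcal X))$, not with $L$. Your fallback---defining the perturbed pseudo-metric $d^*$ and then ``realizing it as a Euclidean embedding through Lemma~\ref{lemma:1}''---does not work either: Lemma~\ref{lemma:1} and Theorem~\ref{thm:ciap} take a map into $\mathbb R^n$ that is already an $\varepsilon$-nearisometry and approximate it by a rigid isometry; they do not manufacture an $\mathbb R^n$-embedding of an abstract metric. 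There is no reason $d^*$ is realizable in $\mathbb R^n$ at all, so this route stalls exactly where you flagged it.

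The paper avoids construction entirely. It fixes the infimizing $f^*\in\mathcal F_{TS}$, writes $d_\textrm{iso}(f_\theta,f^*)$ as a $\sup$ over triplets of a $\max$ of two absolute values, and bounds $\max(a,b)\leq a+b$. The crucial step is a case analysis on the signs of $d_\theta(x,x^+)-d_{f^*}(x,x^+)$ and $d_\theta(x,x^-)-d_{f^*}(x,x^-)$ at the maximizing triplet: if both were positive (resp.\ negative), rescaling $f^*$ by $(1\pm\delta)$ would decrease $d_\textrm{iso}$, contradicting optimality of $f^*$; the mixed case $(-,+)$ is ruled out because it would force the maximizing triplet for $f_\theta$ to have nonpositive triplet loss. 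This leaves only the $(+,-)$ pattern, which lets one drop the absolute values and obtain an expression of the form $[d_\theta(x,x^+)-d_\theta(x,x^-)] - [d_{f^*}(x,x^+)-d_{f^*}(x,x^-)]$. A final optimality argument bounds the $f^*$ bracket by the $f_\theta$ bracket, yielding $d_\textrm{iso}(f_\theta,\mathcal F_{TS})\leq 2\,\mathcal L_\textrm{HNM}(f_\theta)$. The whole thing is variational rather than constructive: the properties of the unknown minimizer $f^*$ are squeezed out by perturbation arguments, and no explicit Triplet-Separated function is ever written down.
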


We present the proof for \textbf{Theorem \ref{thm:disotlhnm}} in Appendix A.

From \textbf{Theorems \ref{thm:hausiso}} and \textbf{\ref{thm:disotlhnm}} we have \textbf{Corollary \ref{coro:1}}.

\begin{corollary} \label{coro:1}
The optimal solution to the Triplet Loss sampled by hard negative mining is equivalent to the optimal solution to $d_\textrm{haus}(f_\theta,\mathcal F_{TS})$ within a constant factor.
\end{corollary}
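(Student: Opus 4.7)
The plan is a direct transitive chaining of \textbf{Theorem \ref{thm:hausiso}} and \textbf{Theorem \ref{thm:disotlhnm}}. Both of these already assert bidirectional linear bounds, and since the shared middle quantity in both is $d_\textrm{iso}(f_\theta,\mathcal F_{TS})$, composing the two sandwich inequalities produces a uniform sandwich bound between the Triplet Loss sampled by hard negative mining (call it $\mathcal L_\textrm{HNM}(f_\theta)$) and $d_\textrm{haus}(f_\theta,\mathcal F_{TS})$. A standard approximation-ratio argument then converts this pointwise inequality into an equivalence of optimizers.

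Concretely, I would first invoke \textbf{Theorem \ref{thm:disotlhnm}} to obtain constants $a_1,a_2>0$ with
\[ a_1\, d_\textrm{iso}(f_\theta,\mathcal F_{TS})\leq \mathcal L_\textrm{HNM}(f_\theta)\leq a_2\, d_\textrm{iso}(f_\theta,\mathcal F_{TS}), \]
and then apply \textbf{Theorem \ref{thm:hausiso}}, under its minimum-thickness hypothesis $\theta_*$, to obtain $b_1,b_2>0$ with
\[ b_1\, d_\textrm{haus}(f_\theta,\mathcal F_{TS})\leq d_\textrm{iso}(f_\theta,\mathcal F_{TS})\leq b_2\, d_\textrm{haus}(f_\theta,\mathcal F_{TS}). \]
Multiplying yields the chained bound
\[ a_1 b_1\, d_\textrm{haus}(f_\theta,\mathcal F_{TS})\leq \mathcal L_\textrm{HNM}(f_\theta)\leq a_2 b_2\, d_\textrm{haus}(f_\theta,\mathcal F_{TS}), \]
so the two objectives differ by at most a multiplicative factor $\kappa=(a_2 b_2)/(a_1 b_1)$ uniformly in $f_\theta$.

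To close, I would run the usual optimizer-comparison argument: letting $f_L^*$ minimize $\mathcal L_\textrm{HNM}$ and $f_H^*$ minimize $d_\textrm{haus}(\cdot,\mathcal F_{TS})$, the chained inequality gives
\[ d_\textrm{haus}(f_L^*,\mathcal F_{TS})\leq \tfrac{1}{a_1 b_1}\mathcal L_\textrm{HNM}(f_L^*)\leq \tfrac{1}{a_1 b_1}\mathcal L_\textrm{HNM}(f_H^*)\leq \kappa\, d_\textrm{haus}(f_H^*,\mathcal F_{TS}), \]
so $f_L^*$ is a $\kappa$-approximate minimizer of $d_\textrm{haus}$, and the symmetric chain shows $f_H^*$ is a $\kappa$-approximate minimizer of $\mathcal L_\textrm{HNM}$. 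This is exactly the ``equivalent within a constant factor'' conclusion.

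The main obstacle is purely bookkeeping — keeping both directions of the two sandwich inequalities straight and verifying that the minimum-thickness hypothesis of \textbf{Theorem \ref{thm:hausiso}} is in force at the iterates involved; no new analytic machinery is needed beyond what was already developed. One subtlety worth flagging is that this argument equates only the \emph{objective values} at the respective optimizers; upgrading to a claim that the minimizers themselves coincide up to an isometry in function space would require further regularity or uniqueness assumptions that the corollary does not appear to demand.
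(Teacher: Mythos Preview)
Your proposal is correct and matches the paper's own argument: it simply chains \textbf{Theorem \ref{thm:hausiso}} and \textbf{Theorem \ref{thm:disotlhnm}} through the common quantity $d_\textrm{iso}(f_\theta,\mathcal F_{TS})$ to obtain two-sided constant-factor bounds, then reads off the equivalence of optimal solutions. If anything, you spell out the optimizer-comparison step and the thickness hypothesis more explicitly than the paper does.
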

\begin{proof}
The proof follows from Theorems \ref{thm:hausiso} and \ref{thm:disotlhnm}, where we show that $d_\textrm{haus}$, $d_\textrm{iso}$, and Triplet Loss sampled by hard negative mining upper and lower bound each other by constant factors. Consequently, the optimal solution to Triplet Loss sampled by hard negative mining, and to $d_\textrm{haus}(f_\theta,\mathcal F_{TS})$, are equivalent within a constant factor.
\end{proof}



\section{Illustrative Examples}
In this section, we illustrate the key ideas of the previous section's theorems by using a toy example with $N=5$ points and embedding dimension $n=2$.
As we will illustrate the equivalence between the Triplet Loss with the Hausdorff-like distance and isometric error, we can visualize the embedding points without any underlying data or neural network. See Figure \ref{fig:toy5a} for the toy example setup.


\begin{figure}[t!]
    \centering
    \includegraphics[width=.4\textwidth]{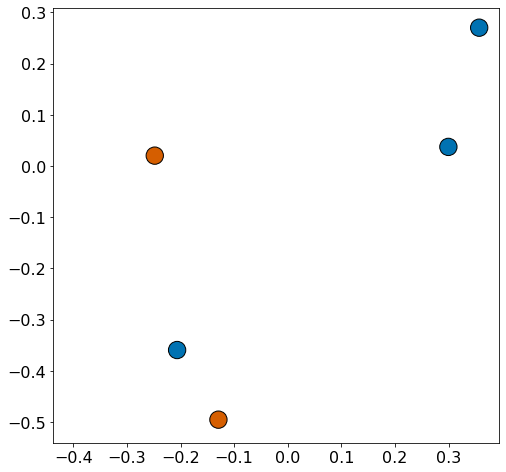}
    \caption{The setup for our toy example is a dataset of five arbitrary points in $\mathbb R^2$, divided into two classes (red and blue).}
    \label{fig:toy5a}
\end{figure}

\begin{figure}[t!]
    \centering
    \includegraphics[width=.4\textwidth]{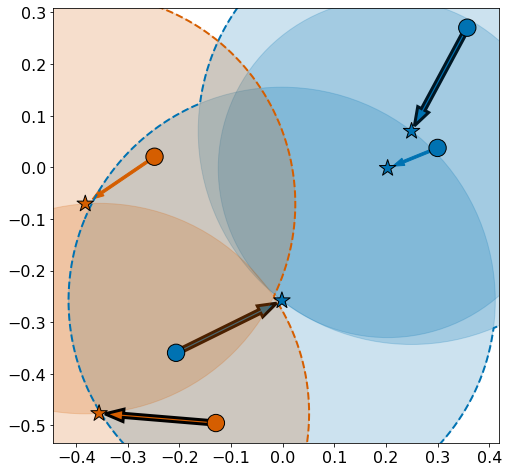}
    \caption{Illustration of $d_\textrm{haus}(f_\theta,\mathcal F_{TS})$. Using the toy example shown in Figure \ref{fig:toy5a}, we compute a $f^*_{\textrm{haus}}$ that minimizes $d_\textrm{haus}(f_\theta,\mathcal F_{TS})$. Arrows represent the function $f^*_{\textrm{haus}}$ and stars indicate the embedding points for each class.
    The red and blue star sets are Triplet-Separated because they lie outside the other's Triplet-Separated boundary, indicated by the dashed colored border.
    The three most important contributors to $d_\textrm{haus}(f_\theta,\mathcal F_{TS})$ are marked with black arrows.
    }
    \label{fig:toy5b}
\end{figure}

First, we will visualize $d_\textrm{haus}(f_\theta,\mathcal F_{TS})$ in Figure \ref{fig:toy5b}. The numerical value of $d_\textrm{haus}(f_\theta,\mathcal F_{TS})$ is determined by the maximum length of the arrows (see caption to Figure \ref{fig:toy5b}), which is marked in the figure with black outlines. Here, we compute the ideal $f^*_\textrm{haus}$ by optimizing the embedding points.


\begin{figure}
    \centering
    \includegraphics[width=.45\textwidth]{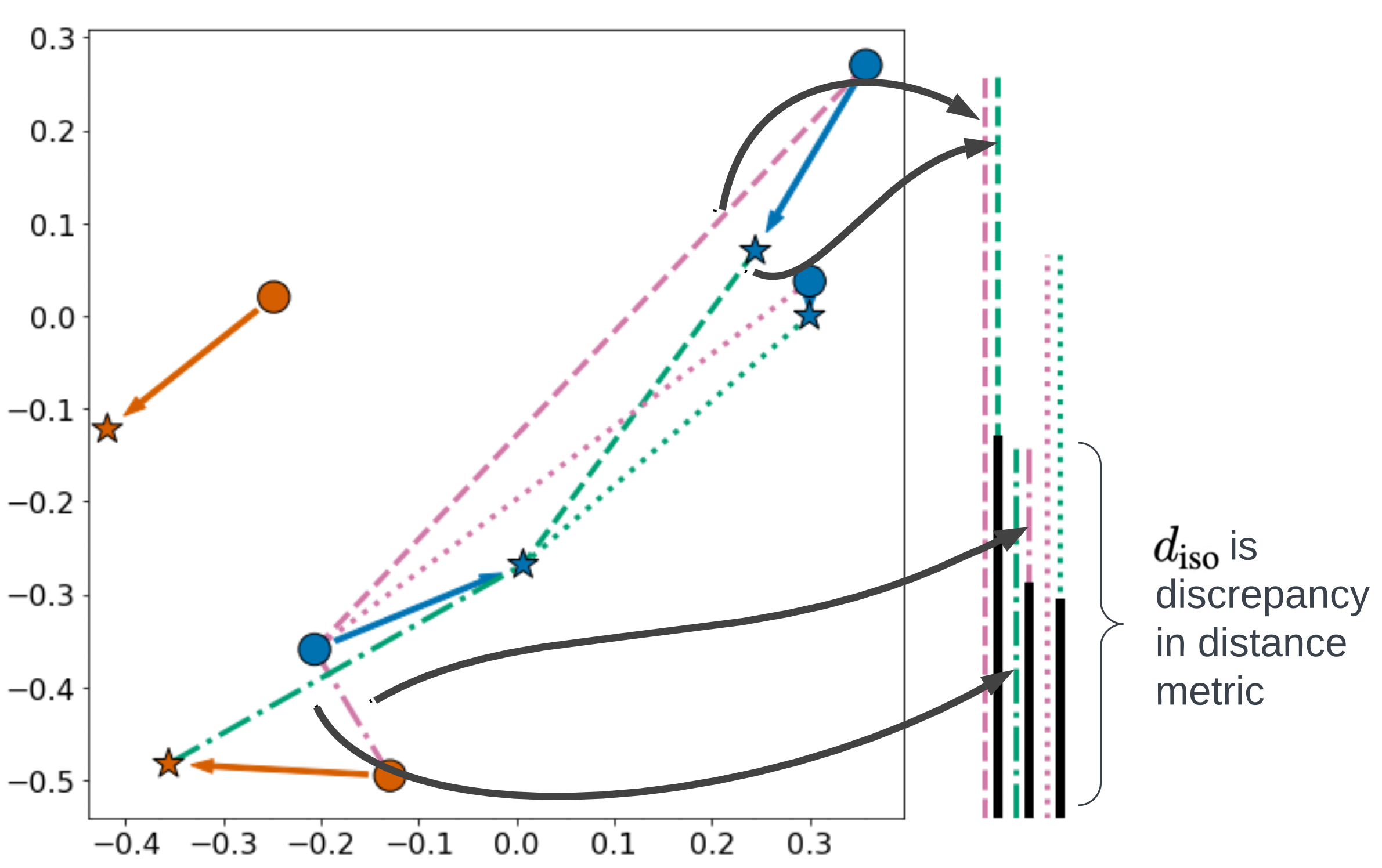}
    \caption{Illustration of $d_\textrm{iso}(f_\theta,\mathcal F_{TS})$. Using the toy example shown in Figure \ref{fig:toy5a}, we compute a $f^*_{\textrm{iso}}$ that minimizes $d_\textrm{iso}(f_\theta,\mathcal F_{TS})$
    Here we compare the distance metric, subtracting the distance between two points under $f_\theta$ (circles) and the distance under $f^*$ (stars), to compute $d_\textrm{iso}$ as shown by the black vertical bar on the right.}
    \label{fig:toy5c}
\end{figure}

Figure \ref{fig:toy5c} illustrates $d_\textrm{iso}(f_\theta,\mathcal F_{TS})$, which measures the discrepancy in distance metric. Note that the $f^*_\textrm{iso}$ that minimizes $d_\textrm{iso}(f_\theta,f^*_\textrm{iso})$ is not necessarily the same as $f^*_\textrm{haus}$.
Revisiting the second part of the proof for \textbf{Theorem \ref{thm:hausiso}}, $d_\textrm{haus}(f_\theta,\mathcal F_{TS})$ is lower bounded by $0.5d_\textrm{iso}(f_\theta,\mathcal F_{TS})$ and upper bounded by $cd_\textrm{iso}(f_\theta,\mathcal F_{TS})$. For this specific toy example, we calculate the constant factor error to be $c=0.53$, making $d_\textrm{haus}(f_\theta,\mathcal F_{TS})$ an almost linear factor of $d_\textrm{iso}(f_\theta,\mathcal F_{TS})$. This essentially illustrates \textbf{Theorem \ref{thm:hausiso}}.


\begin{figure}[t!]
    \centering
    \includegraphics[width=.4\textwidth]{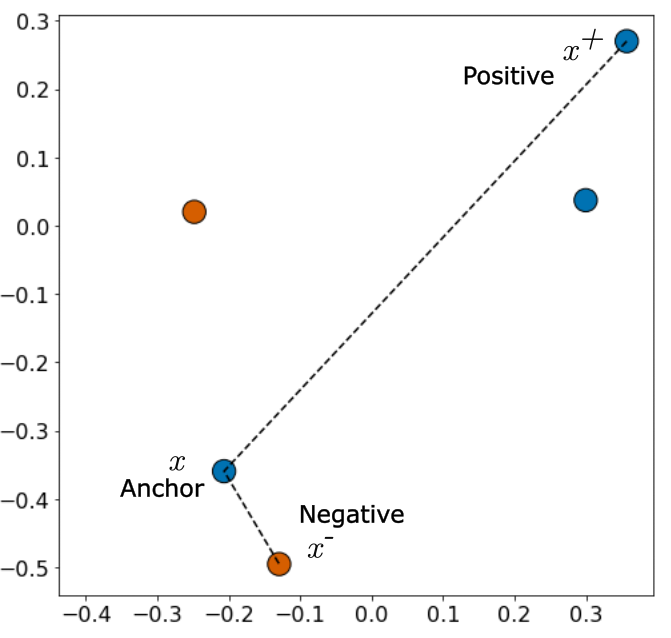}
    \caption{Illustration of the Triplet Loss sampled by hard negative mining. Using the toy example shown in Figure \ref{fig:toy5a}, we take the triplet (anchor, positive, negative) that maximizes the Triplet Loss.}
    \label{fig:toy5d}
\end{figure}

Next, we show the Triplet Loss sampled by hard negative mining in Figure \ref{fig:toy5d}. The equivalence proved by \textbf{Theorem \ref{thm:disotlhnm}} is shown by comparing Figure \ref{fig:toy5d} against Figure \ref{fig:toy5c}, as the triplet selected by hard negative mining corresponds with the same three points with the largest discrepancies in distance metric.
Through Figures \ref{fig:toy5b}, \ref{fig:toy5c}, and \ref{fig:toy5d}, we have a visualization of the statement and proof of \textbf{Corollary \ref{coro:1}}.

\section{Discussion}
\subsection{Novel Insights on Network Collapse}
As mentioned in the Literature Review section, current literature observes that hard negative mining results in network collapse inconsistently. We propose a theory that explains hard negative mining's intermittent behavior.

We hypothesize that network collapse happens when the $f^*$ that minimizes $d_\textrm{haus}(f_\theta,\mathcal F_{TS})$ is a collapsed function, or when the ``nearest'' function maps all the data points onto a much smaller subset. An example of this effect can be seen in figure \ref{fig:toy20}, where we have 20 random data points with a collapsed $f^*$.

\begin{figure}
    \centering
    \includegraphics[width=.4\textwidth]{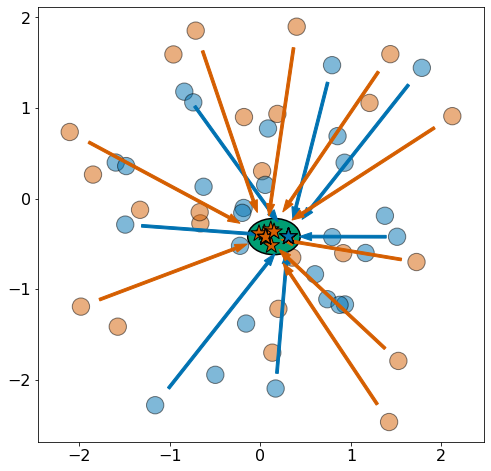}
    \caption{Illustration of how hard negative mining leads to collapsed solutions given $N=20$ samples and embedding dimension $n=2$. We find $f^*$ that minimizes $d_\textrm{haus}(f_\theta,\mathcal F_{TS})$ and observe that the embedding points (stars) collapse into a much smaller subset, marked by the ellipse in green.
    }
    \label{fig:toy20}
\end{figure}

We observe that when the number of samples $N$ is much greater than the embedding dimension $n$, the ideal counterpart $f^*$ is much more likely to collapse. On the contrary, when $n<N$, the network is much less likely collapse. Therefore when training a network with a batch-hard negative mining strategy where the network learns on the hardest triplet in a single batch, we expect the batch-hard fails when $N\gg n$, and does not fail otherwise.

We further support this hypothesis by examining prior publications that use hard negative mining.
Herman et al. \cite{hermans2017defense} find hard negative mining works with a batch size of $N=72$ and embedding dimension $n=128$. On the other hand, Schroff et al. \cite{schroff2015facenet} find that hard negative mining fails when they use thousands of samples per batch with embedding dimension $n=128$.

\subsection{Limitations and Future Work}
In (\ref{eq:desprop}) of the Background and Definitions section, we assumed the margin term $\alpha$ to be zero. This assumption may not be valid when $\alpha$ is large enough to affect the optimal solution to the Triplet Loss sampled by hard negative mining. In the future, we plan to further study the effect of the margin $\alpha$ on the Triplet Loss.




Additionally, we intend to study methods to avoid network collapse. As illustrated in Figure \ref{fig:toy20}, network collapse occurs because the $f^*$ that minimizes $d_\textrm{haus}(f_\theta,\mathcal F_{TS})$ is a collapsed function. To avoid network collapse, we would restrict the set $\mathcal F_{TS}$ to disallow collapsed functions. This would necessitate deriving a new set of equations that correctly utilize the restricted set.

\section{Conclusion}
In this paper, we apply the isometric approximation theorem to prove that the Triplet Loss sampled by hard negative mining is equivalent to minimizing a Hausdorff-like distance.
This mathematical foundation produces new insights into hard negative mining. In particular, it explains network collapse, a phenomenon that prior theories were unable to fully explain.



With these insights, we provide the groundwork for future forms of hard negative mining that avoid network collapse.
Further, as the theory of isometric approximation is independent of the Triplet Loss, it can be applied to any system utilizing the Euclidean metric or cosine similarity on a sphere. Therefore, this theory can be extended to analyze other metric learning methods like Ladder Loss or Contrastive Learning.

Through this and future work, we intend to leverage the power of mathematics to explain the fundamental principles of `black-box' machine learning approaches. Categorizing this previously undefined behavior forms new opportunities to strengthen modern machine learning and artificial intelligence research.

\bibliography{bibliography}

\onecolumn
\pagebreak

\setcounter{equation}{27}

\section{Appendix A. Proof of Theorem 3}


\begin{proof}
Here, we present a detailed proof for the theorem using equations (\ref{eq:23}-\ref{eq:27})

From the definition of $d_\textrm{iso}$ in (\ref{eq:23}), we introduce the anchor, positive, and negative triplet $(x,x^+,x^-)$ in (\ref{eq:24}) by re-labelling $x_1\to x$. Recognizing that $x_2$ must either have the same or different label from $x_1$, we re-label $x_2\to x^+$ or $x_2\to x^-$, and pick the max of these distances for any given triplet. 
\begin{gather}
    d_\textrm{iso}(f_\theta,\mathcal F_{TS}) = \inf_{f^*\in\mathcal F_{TS}}\sup_{x_1,x_2\in\mathcal X}\bigg| ||f_\theta(x_1)-f_\theta(x_2)|| - ||f^*(x_1) - f^*(x_2)||\bigg| \label{eq:23} \\
    =\inf_{f^*\in\mathcal F_{TS}}\sup_{x,x^+,x^-}\max\left\{\bigg| ||f_\theta(x)-f_\theta(x^+)|| - ||f^*(x) - f^*(x^+)||\bigg|,\bigg| ||f_\theta(x)-f_\theta(x^-)|| - ||f^*(x) - f^*(x^-)||\bigg|\right\} \label{eq:24}
\end{gather}

Inequality (\ref{eq:25}) follows from $\max(a,b)\leq a+b$ for positive $a,b$.
\begin{gather}
    \leq \inf_{f^*\in\mathcal F_{TS}}\sup_{x,x^+,x^-} \bigg| ||f_\theta(x)-f_\theta(x^+)|| - ||f^*(x) - f^*(x^+)||\bigg|+\bigg| ||f_\theta(x)-f_\theta(x^-)|| - ||f^*(x) - f^*(x^-)||\bigg| \label{eq:25}
\end{gather}

Now fix the $f^*$ that minimizes (\ref{eq:25}). We next prove via contradiction that the first term (\ref{expr:di+}) is positive and the second term (\ref{expr:di-}) is negative. 
\begin{subequations}
    \begin{gather}
        ||f_\theta(x)-f_\theta(x^+)||-||f^*(x)-f^*(x^+)|| \label{expr:di+} \\
        ||f_\theta(x)-f_\theta(x^-)||-||f^*(x)-f^*(x^-)||\label{expr:di-}
    \end{gather}
\end{subequations}

There are four cases we must consider here, as we treat the zero case as either positive or negative. Case 1: (\ref{expr:di+}) is positive, (\ref{expr:di-}) is positive. Denoting this as $++$, our four cases are $(1:++)$, $(2:--)$, $(3:-+)$, $(4:+-)$. Now we prove by contradiction that case 4 is the only valid one.

Case $1 (++):$. Consider the function $f^\dagger(x)=(1+\delta)f^*(x)$ where $\delta>0$ is a small constant. Then $d_\textrm{iso}(f_\theta,f^\dagger)<d_\textrm{iso}(f_\theta,f^*)$, contradicting the statement that $f^*$ minimizes $d_\textrm{iso}$.

Case $2 (--):$. Consider the function $f^\dagger(x)=(1-\delta)f^*(x)$ where $\delta>0$ is a small constant. Then $d_\textrm{iso}(f_\theta,f^\dagger)<d_\textrm{iso}(f_\theta,f^*)$, contradicting the statement that $f^*$ minimizes $d_\textrm{iso}$.

Case $3 (-+):$. We can algebraically rearrange (\ref{eq:25}) to get:
\begin{gather}
    ||f^*(x)-f^*(x^+)||-||f^*(x)-f^*(x^-)||-||f_\theta(x)-f_\theta(x^+)|| + ||f_\theta(x)-f_\theta(x^-)|| \geq 0 \label{eq:thm3p0} \\
    ||f^*(x) - f^*(x^+)|| - ||f^*(x) - f^*(x^-)|| \leq 0 \label{eq:thm3p1} \\
    -\left(||f_\theta(x)-f_\theta(x^+)|| - ||f_\theta(x)-f_\theta(x^-)||\right) \geq 0 \label{eq:thm3p2}
\end{gather}

(\ref{eq:thm3p1}) comes from the definition of $f^*$ as a Triplet-Separated function; then (\ref{eq:thm3p2}) comes from combining (\ref{eq:thm3p0}) and (\ref{eq:thm3p1}). However, this means that the triplet that maximizes the expression has negative Triplet Loss, therefore there must be some other $f^*_2$ with a smaller value. This contradicts the statement that $f^*$ minimizes $d_\textrm{iso}$.

With Cases 1, 2, and 3 eliminated, we only have Case 4 and all the zero cases ($00$, $+0$, $-0$, $0+$, $0-$). We note that the cases $-0$ and $0+$ can be dis-proven using the same logic as Case 3.
This leaves the four following valid cases ($00$, $0-$, $+0$, $+-$), where we can connect back to (\ref{eq:25}) and write:
\begin{gather}
    \sup_{x,x^+,x^-} \bigg| ||f_\theta(x)-f_\theta(x^+)|| - ||f^*(x) - f^*(x^+)||\bigg|+\bigg| ||f_\theta(x)-f_\theta(x^-)|| - ||f^*(x) - f^*(x^-)||\bigg| \\
    = \sup_{x,x^+,x^-}||f_\theta(x)-f_\theta(x^+)|| - ||f_\theta(x)-f_\theta(x^-)|| - \left(||f^*(x) - f^*(x^+)|| - ||f^*(x) - f^*(x^-)||\right) \label{eq:26}
\end{gather}

Note that (\ref{eq:26}) resembles the Triplet Loss. The Triplet Loss for $f^*$ cannot dominate the maximum triplet loss for $f_\theta$, otherwise it would contradict the statement that $f^*$ minimizes the isometric error, giving us:
\begin{equation}
    -\left(||f^*(x)-f^*(x^+)|| - ||f^*(x)-f^*(x^-)||\right) \leq \sup_{x_2,x_2^+,x_2^-} ||f_\theta(x)-f_\theta(x^+)|| - ||f_\theta(x)-f_\theta(x^-)|| \label{eq:thm3p4} \\
\end{equation}


Using (\ref{eq:thm3p4}), we have the following relation with respect to (\ref{eq:26}).
\begin{equation}
    \leq 2\sup_{x,x^+,x^-} ||f_\theta(x)-f_\theta(x^+)|| - ||f_\theta(x)-f_\theta(x^-)|| \label{eq:27}
\end{equation}

Note that (\ref{eq:27}) is identical to twice the expression for the Triplet Loss sampled by Hard Negative Mining. Therefore the the Triplet Loss sampled by Hard Negative Mining upper bounds the isometric error by a constant factor of 2.

Additionally, we can prove that the Triplet Loss sampled by Hard Negative Mining upper bounds the isometric error. Starting from the definition of isometric error in (\ref{eq:30}), inequality (\ref{eq:31}) follows from $\max(a,b)\geq (a+b)/2$.

\begin{gather}
    d_\textrm{iso}(f_\theta,\mathcal F_{TS}) = \inf_{f^*\in\mathcal F_{TS}}\sup_{x_1,x_2\in\mathcal X}\bigg| ||f_\theta(x_1)-f_\theta(x_2)|| - ||f^*(x_1) - f^*(x_2)||\bigg| \label{eq:30} \\
    \geq \frac{1}{2}\inf_{f^*\in\mathcal F_{TS}}\sup_{x,x^+,x^-} \bigg| ||f_\theta(x)-f_\theta(x^+)|| - ||f^*(x) - f^*(x^+)||\bigg|+\bigg| ||f_\theta(x)-f_\theta(x^-)|| - ||f^*(x) - f^*(x^-)||\bigg| \label{eq:31}
\end{gather}

Once again fixing $f^*$, we have equality (\ref{eq:32}) by the same logic as the previous part. Inequality (\ref{eq:33}) follows from the fact that $||f^*(x) - f^*(x^+)|| - ||f^*(x) - f^*(x^-)||\leq 0$ by the definition of $f^*$ as Triplet-Separated.
\begin{gather}
    = \frac{1}{2}\sup_{x,x^+,x^-}||f_\theta(x)-f_\theta(x^+)|| - ||f_\theta(x)-f_\theta(x^-)|| - \left(||f^*(x) - f^*(x^+)|| - ||f^*(x) - f^*(x^-)||\right) \label{eq:32} \\
    \geq \frac{1}{2}\sup_{x,x^+,x^-}||f_\theta(x)-f_\theta(x^+)|| - ||f_\theta(x)-f_\theta(x^-)|| \label{eq:33}
\end{gather}

Therefore isometric error upper bounds the Triplet Loss sampled by Hard Negative Mining by a constant factor of 2.
\end{proof}

\end{document}